\newtheorem{proposition}{Proposition}
\newtheorem{theorem}{Theorem}
\title{GNN-Transformer Cooperative Architecture for \\Trustworthy Graph Contrastive Learning}
\author{
    Jianqing Liang,
    Xinkai Wei,
    Min Chen,
    Zhiqiang Wang,
    Jiye Liang\thanks{Corresponding author.} 
}
\title{My Publication Title --- Single Author}
\author {
    Author Name
}
\title{My Publication Title --- Multiple Authors}
\author {
    % Authors
    First Author Name\textsuperscript{\rm 1,\rm 2},
    Second Author Name\textsuperscript{\rm 2},
    Third Author Name\textsuperscript{\rm 1}
}
\begin{document}

\maketitle

\begin{abstract}
Graph contrastive learning (GCL) has become a hot topic in the field of graph representation learning. In contrast to traditional 
supervised learning relying on a large number of labels, GCL exploits augmentation strategies to generate multiple 
views and positive/negative pairs, both of which greatly influence the performance. Unfortunately, commonly used random augmentations may 
disturb the underlying semantics of graphs. Moreover, traditional GNNs, a type of widely employed encoders in GCL, 
are inevitably confronted with over-smoothing and over-squashing problems. To address these issues, 
we propose GNN-Transformer Cooperative Architecture for Trustworthy Graph Contrastive Learning (GTCA), which inherits the advantages of 
both GNN and Transformer, incorporating graph topology to obtain comprehensive graph representations.
Theoretical analysis verifies the trustworthiness of the proposed method.
Extensive experiments on benchmark datasets demonstrate state-of-the-art empirical performance.
\end{abstract}

% Uncomment the following to link to your code, datasets, an extended version or similar.
%
\begin{links}
    \link{Code}{https://github.com/a-hou/GTCA}
\end{links}

\section{Introduction}

Compared with traditional supervised learning, self-supervised learning eliminates the dependence on labels.
As one of the most representative methods of self-supervised learning, contrastive learning has been widely applied in the graph domain, i.e., Graph Contrastive Learning (GCL).
GCL utilizes multiple views and positive/negative pairs for node/graph representations \cite{GRACE,GMI,zhu2021contrastive,yin2022autogcl,ijcai_subgraph},
the performance of which are influenced by augmentation strategies, graph encoders and loss functions.

Different from computer vision domain where augmented images obtained with cropping, rotation, and other strategies usually retain the same semantic as the original images \cite{shorten2019survey},
graph augmentation will disturb the underlying semantics of graphs, which may affect the performance on downstream tasks \cite{li2022let}.
Figure~\ref{fig:example} (a) shows that even after randomly cropping or changing the color of an image, the underlying semantic still remains the same, which we can still easily recognize.
Figure~\ref{fig:example} (b) presents that removal of an atom or edge from the molecules may change their structures, resulting in different compounds.
\begin{figure}
\centering
    \includegraphics[height=0.55\columnwidth]{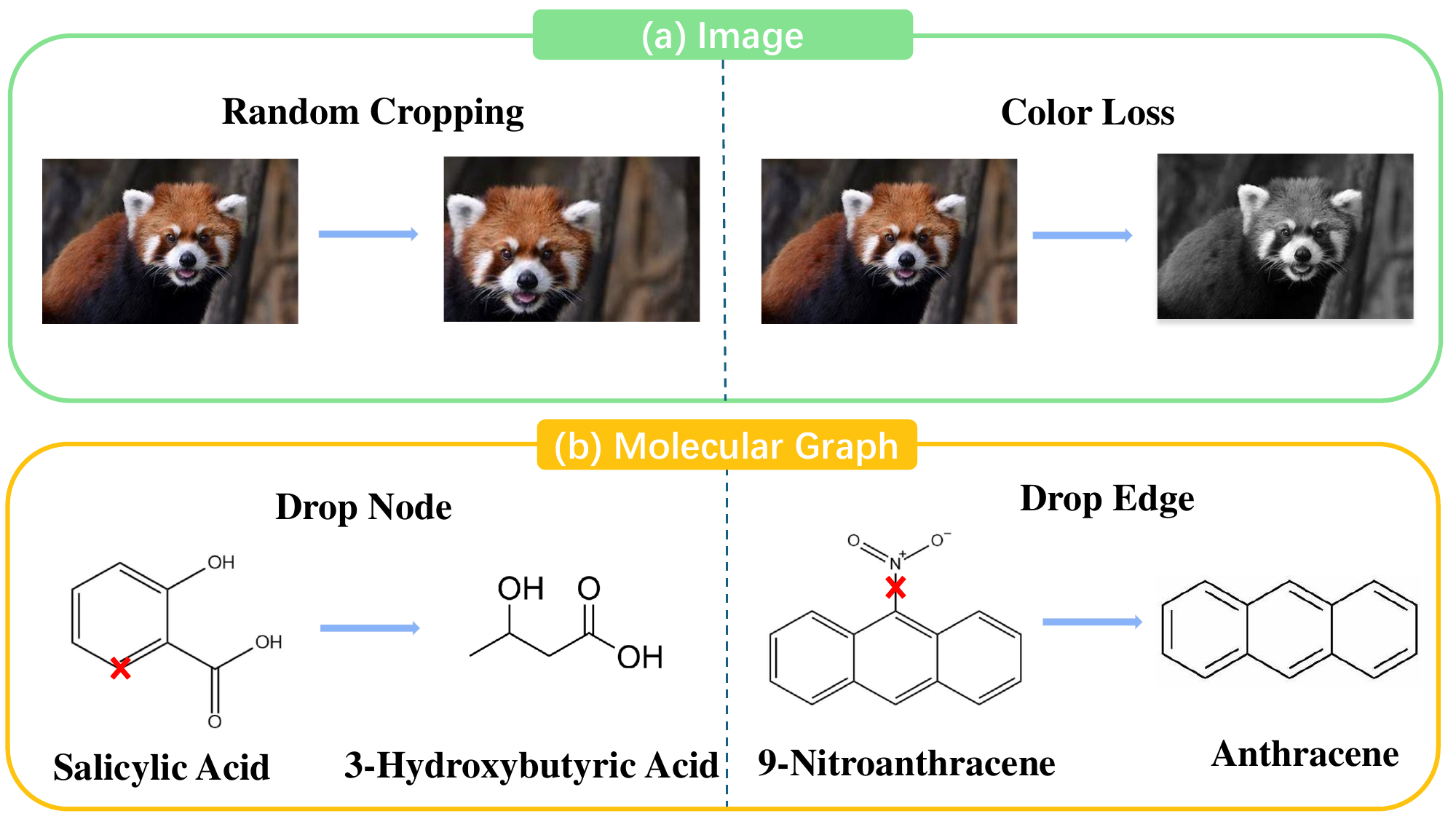}
    \caption{Augmentations on image (a) keep the semantic information, while augmentations on graphs (b) change the underlying semantic information.}
    \label{fig:example}
    \vspace{-1ex} % 调整图形与上下文本的垂直间距
\end{figure}
In order to address this issue, a number of augmentation strategies on graphs spring up. 
GCA~\cite{GCA} introduces learnable adaptive augmentation strategies with the structural and attribute characteristics of graphs for GCL.
GCS~\cite{wei2023boosting} employs gradient-based strategies to differentiate between semantics and environment within graphs. 
It perturbs the environment while maintaining the semantics to obtain positive pairs, and perturbs the semantics while preserving the environment to obtain negative pairs.
However, effective augmentation strategies must achieve a balance between the preservation of graph structural information and the generation of meaningful views \cite{suresh2021adversarial}. 
Poorly designed augmentations may result in redundant or irrelevant information generation, leading to inferior performance.

Most of the existing GCL methods exploit GNNs as graph encoders \cite{DGI,MVGRL,SUGRL,AFGRL,NCLA}.
GNNs iteratively propagate, transform, and aggregate representations from topological neighbors to progressively update node features~\cite{kipf2016semi,hamilton2017inductive,zhang2022graph}.  
With multi-layer stacking, these local features are progressively integrated to form a comprehensive perception of the entire graph structure. 
While this mechanism is particularly effective in handling local dependencies, especially in capturing both direct and indirect relation between nodes,
it fails to capture complex global structures and long-range dependencies 
within graphs \cite{xu2018powerful,garg2020generalization}.
In contrast, Graph Transformers (GTs) are capable of effectively capturing global dependencies between nodes with self-attention mechanisms~\cite{GAT,yun2019graph,kreuzer2021rethinking}. 
GTs can handle long-range dependencies within graphs, rather than being confined to local neighborhoods. 
By calculating attention scores between nodes, GTs update node representations with all other nodes in the graph, thereby effectively integrating global information. 
This mechanism enables GTs to model complex interactions and long-term dependencies between nodes.
Therefore, they are more flexible and accurate in handling dependencies between distant nodes \cite{yun2019graph},
which is ignored by existing GCL methods in choosing graph encoders to learn node/graph representations.
However, the quadratic complexity of GTs limits the scalability. 
Currently, some works introduce linear attention  mechanisms to improve scalability for large-scale graphs~\cite{nodeformer,SGFormer,liang}. 

Existing GCL methods mostly use InfoNCE as their loss functions~\cite{GRACE,zhang2022costa,spectral}, where each anchor point has only one positive pair. 
Therefore, when calculating the loss, nodes from the same class as the anchor point are pushed away. 
This may lead to ignorance of the potential information from similar nodes when handling node feature similarity tasks~\cite{NCLA}. 
While this design effectively encourages the model to distinguish between positive and negative pairs, it may lead to insufficient aggregation of nodes from the same class, 
thereby affecting the ability to represent graph data and capture graph structures.

In order to address the issues above, we propose a novel method called GNN-Transformer Cooperative Architecture for Trustworthy Graph Contrastive Learning (GTCA). 
GTCA utilizes GCN and NodeFormer, a linear GT as graph encoders to generate node representation views without random augmentation strategies.
In addition, it utilizes topological property of graphs to generate topology structure view. 
Furthermore, we design a novel loss function to exploit the intersection of the node representation views and topology structure view
as positive pairs.
The contributions are summarized follows:
\begin{itemize}
\item We utilize both GCN and NodeFormer as graph encoders to capture comprehensive perception of graphs, which has not been well explored in 
the field of GCL to our best knowledge.
\item We generate topology structure view with topological property of graphs.
Therefore, we introduce an augmentation-free strategy for GCL, which can enhance efficiency and avoid the potential risks of disturbing the underlying semantics of graphs. 
\item We design a novel contrastive loss function with multiple positive pairs for each node. Theoretical analysis and experimental results demonstrate the effectiveness of the GTCA method.
\end{itemize}

\section{Related Work}
% \noindent\textbf{Graph Contrastive Learning.} Current GCL methods can be categorized into three mainstream paradigms: DGI framework \cite{DGI}, InfoNCE framework \cite{InfoNCE} and BGRL framework \cite{bgrl}.

% Inspired by Deep InfoMax \cite{deepinfomax}, DGI learns node representations by maximizing the mutual information between global and local features from the graph. 
% MVGRL \cite{MVGRL} extends DGI with graph diffusion, driving further advancement in unsupervised GCL. 
% In spite of the competitive performance, global features in these methods may be insufficient to retain node-level embedding information. 

% Inspired by the significant success of InfoNCE, GRACE \cite{GRACE} learns node representations by pulling the same node together and pushing different nodes apart under two augmented views.
% This principle is widely used in both node classification \cite{GCA,spectral,NCLA} and graph classification \cite{you2020graph,MVGRL,wei2023boosting} tasks.
% While these methods achieve significant success, they still suffer from sampling bias issues.

% To address the sampling bias problem, BGRL removes negative pairs with an exponential moving average method.
% AFGRL \cite{AFGRL} extends BGRL with the augmentation-free strategy and node sampling technique.
% However, owning to the randomness of \( k \)-means, the sampling bias is introduced.
\noindent\textbf{Graph Contrastive Learning.} Current GCL methods can be categorized into three mainstream paradigms: DGI framework~\cite{DGI}, InfoNCE framework~\cite{InfoNCE} and BGRL framework~\cite{bgrl}.

DGI aggregates the features of all nodes in the graph to obtain a global feature, 
then maximizes the mutual information between the global feature and the node features to learn node representations.
Based on this framework, MVGRL \cite{MVGRL} further advances the development of unsupervised graph contrastive learning through the adoption of graph diffusion and subgraph sampling methods.
In spite of the competitive performance, global features in these methods may be insufficient to retain node-level embedding information. 

GRACE \cite{GRACE} first utilizes the InfoNCE loss to maximize the mutual information between positive pairs under two augmented views, while minimizing the mutual information between negative pairs to learn node representations.
This principle is widely used in both node classification~\cite{GCA,spectral,NCLA} and graph classification~\cite{you2020graph,MVGRL,wei2023boosting} tasks.
While these methods achieve significant success, they still suffer from sampling bias issues.

BGRL uses the BYOL \cite{grill2020bootstrap} method to remove negative pairs, thereby reducing computational complexity. 
However, it relies on graph augmentation to obtain positive pairs, which may disturb the underlying semantics of the graph.
AFGRL~\cite{AFGRL} extends BGRL~\cite{bgrl} with an augmentation-free strategy and utilizes \( k \)-means for positive pair sampling. 
However, the randomness of \( k \)-means may influence the performance of down-stream tasks.

\noindent\textbf{Graph Augmentation.} 
The existing graph augmentation strategies include node dropping \cite{you2020graph}, edge perturbation \cite{qiu2020gcc,spectral}, attribute masking \cite{GCA,zhang2022costa} and subgraph extraction \cite{MVGRL}.
GRACE \cite{GRACE} employs random edge perturbation and node feature masking to generate two views.
GCA \cite{GCA} proposes an adaptive augmentation strategy which integrates both structural and attribute information.
Similarly, GCS \cite{wei2023boosting} utilizes the structural and semantic information to achieve adaptive graph augmentation.
Furthermore, CI-GCL \cite{tan2024community} proposes a community invariant GCL framework to maintain graph community structure.
Despite the considerable success, bias is introduced due to the disturbance of the semantic.

\begin{figure*}
	\centering
	\includegraphics[width=0.9\linewidth]{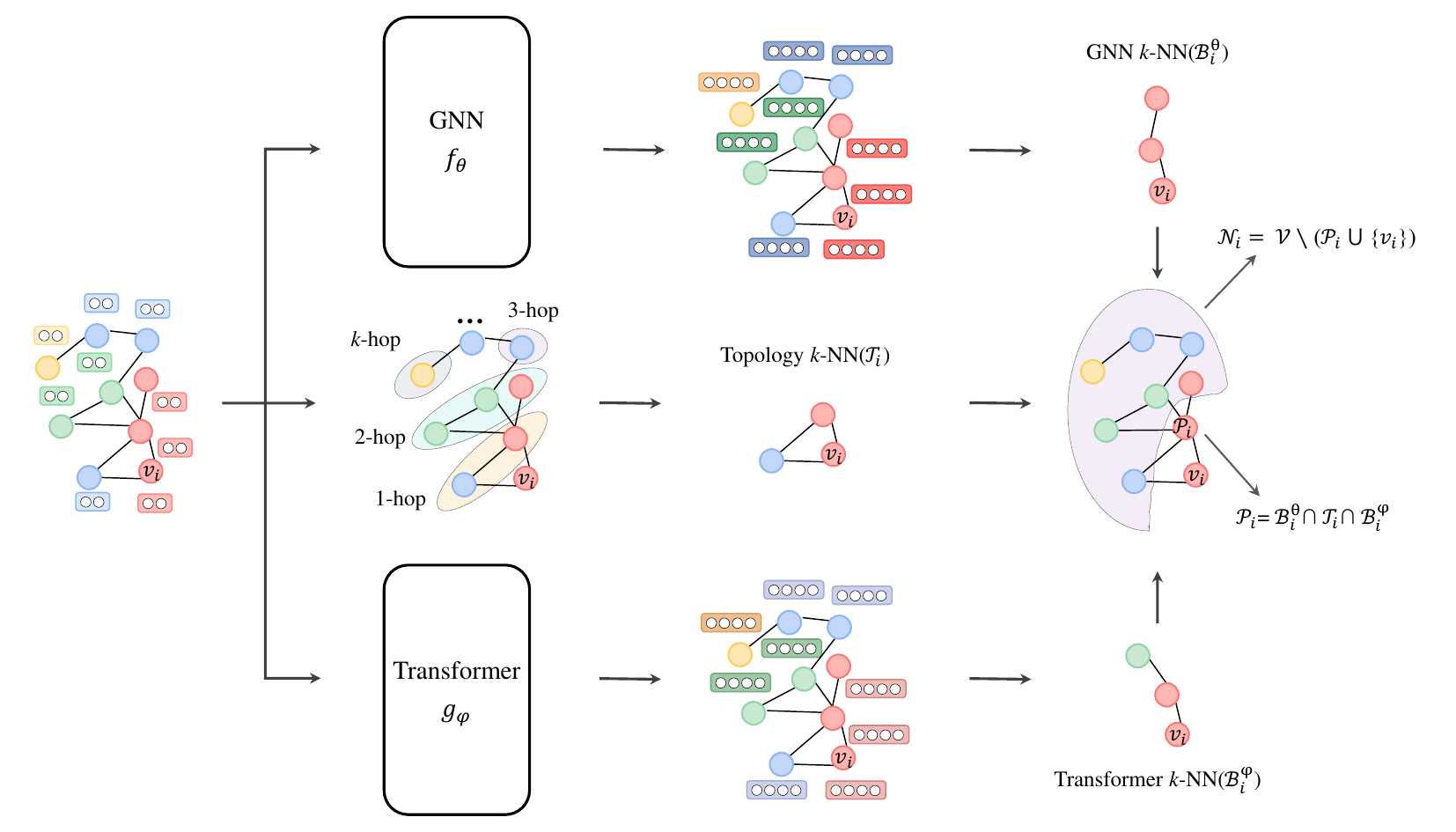} % Reduce the figure size so that it is slightly narrower than the column. Don't use precise values for figure width.This setup will avoid overfull boxes.
% 	\vspace{-1ex}
	\caption{The architecture of GTCA. Given a graph, the GNN encoder \( f_\theta \) and Transformer encoder \( g_\varphi \) generate node embeddings \(\boldsymbol{H}_{\theta}\) and \(\boldsymbol{H}_{\varphi}\). 
    We apply two node embeddings to obtain \( k \)-NNs of \(v_i\), i.e., \(\mathcal{B}_i^{\theta}\), \(\mathcal{B}_i^{\varphi}\),
    intersects which with the topological \( k \)-NNs \(\mathcal{T}_i\) to obtain positive and negative pairs for node \(v_i\) separately, i.e., \(\mathcal{P}_i\) and \(\mathcal{N}_i\). 
    Finally, we employ the contrastive loss to achieve that the anchor nodes close to positive pairs and far from negative pairs.
    }
    \label{fig:model}
	%\vspace{-3ex}
\end{figure*}

\section{Method}
In this section, we provide a detailed description of GTCA, including graph encoders, node sampling and contrastive loss function. 
Figure \ref{fig:model} shows the model architecture of GTCA.

\subsection{Preliminaries}
Let \(\mathcal{G} = (\mathcal{V}, \mathcal{E})\) denote a graph, where \(\mathcal{V} = \{ v_1, \cdots, v_N\}\), \(\mathcal{E} \subseteq \mathcal V \times \mathcal V\) represent the node set and the edge set respectively.
We denote the feature matrix and the adjacency matrix as \(\bm{X} \in \mathbb{R}^{N \times F}\) and \(\bm{A} \in \{0,1\}^{N \times N}\), where \(\bm{x}_i \in \mathbb{R}^{F}\) is the feature of \(v_i\), and \(\bm{A}_{ij} = 1\) if \((v_i, v_j) \in \mathcal{E}\).
We aim to learn a GNN encoder \( f_\theta (\bm{X}, \bm{A}) \in \mathbb{R}^{N \times D} \) and a Transformer encoder \( g_\varphi (\bm{X}, \bm{A}) \in \mathbb{R}^{N \times D} \). 
They take the node features and structures as input, and generate node embeddings in low dimensionality, i.e., \( D \ll F \).
We denote \( \bm{H}_\theta = f_\theta (\bm{X}, \bm{A}) \) as GNN node representations and \( \bm{H}_\varphi = g_\varphi (\bm{X}, \bm{A}) \) as Transformer node representations,
where \( \bm{h}_i^\theta \) is GNN embedding of node \( v_i \), and \( \bm{h}_i^\varphi \) is the Transformer embedding of node \( v_i \) .

\subsection{Graph Encoders}
We employ GCN~\cite{kipf2016semi} as an encoder to capture the local structural information of the nodes. 
GCN utilizes a series of graph convolution layers to aggregate information from the neighbors.
We update each layer according to the following equation:

\begin{equation}
    \boldsymbol{H}^{(l+1)} = \sigma \left( \hat{\boldsymbol{A}} \, \boldsymbol{H}^{(l)} \, \boldsymbol{W}^{(l)} \right)
    \end{equation}
where \(\boldsymbol{H}^{(l)}\) denotes the feature matrix at layer \(l\), \(\boldsymbol{H}^{(0)}\) = \(\bm{X}\), \(\hat{\boldsymbol{A}}\) is the normalized adjacency matrix with self-loops, \(\boldsymbol{W}^{(l)}\) represents the learnable weight matrix for layer \(l\), and \(\sigma\) is a non-linear activation function, i.e., ReLU.
We apply a 2-layer GCN to obtain \(\boldsymbol{H}_{\theta}\). 

Meanwhile, we utilize the NodeFormer~\cite{nodeformer} as the Transformer encoder with the following equation:
\begin{equation}\label{eqn-attn-gumbel-final}
    \mathbf h_i^{(l+1)} \approx \sum_{j=1}^N \frac{\phi(\mathbf q_i / \sqrt{\tau})^\top \phi(\mathbf k_j / \sqrt{\tau}) e^{g_j/\tau}}{\sum_{w=1}^N \phi(\mathbf q_i / \sqrt{\tau})^\top \phi(\mathbf k_w / \sqrt{\tau}) e^{g_w/\tau}} \cdot \mathbf v_j  
    %= \frac{\phi(\mathbf q_i / \sqrt{\tau})^\top  \sum_{v=1}^N e^{g_j/\tau}\phi(\mathbf k_j / \sqrt{\tau}) \cdot \mathbf v_j^\top  }{ \phi(\mathbf q_i / \sqrt{\tau})^\top \sum_{w=1}^N e^{g_w/\tau}\phi(\mathbf k_w / \sqrt{\tau})}. 
\end{equation} 
where \( \phi (\cdot) \) denotes a kernel function and $\mathbf q_i = \boldsymbol{W}_Q^{(l)} \mathbf h_i^{(l)}$, $\mathbf k_j = \boldsymbol{W}_K^{(l)} \mathbf h_j^{(l)}$ and $\mathbf v_j = \boldsymbol{W}_V^{(l)} \mathbf h_j^{(l)}$.
We can obtain \(\boldsymbol{H}_{\varphi}\) with NodeFormer.

Contrastive learning is an important tool for multi-view learning. 
Inspired by~\cite{huang2021makes}, we present our finding with respect to multi-view learning.
\begin{theorem}\label{thm:mn-modality}
    Let \(\mathcal{G} = (\mathcal{V}, \mathcal{E})\) be a graph dataset of $N$ nodes drawn i.i.d. according to an unknown distribution $\mathcal{D} .$ 
    Let $\mathcal{M},\mathcal{N}$ be two distinct subsets of $[K]$, where $K$ is the number of multiple views. Assume empirical risk 
    minimizers $(\hat{h}_{\mathcal{M}},\hat{g}_{\mathcal{M}})$ and $(\hat{h}_{\mathcal{N}},\hat{g}_{\mathcal{N}})$, training with the $\mathcal{M}$ and $\mathcal{N}$  views separately. Then, 
           for all $1>\delta>0,$ with probability at least $1-\frac{\delta}{2}$:
%            \begin{align}
%    &r\left(\hat{h}_{\mathcal{M}}\circ \hat{g}_{\mathcal{M}}\right)- r\left(\hat{h}_{\mathcal{N}}\circ \hat{g}_{\mathcal{N}}\right)\nonumber\\
%    &\leq \gamma_{\mathcal{G}}(\mathcal{M},\mathcal{N})+8L\mathfrak{R}_{N}( \mathcal{H}\circ\mathcal{G}_{\mathcal{M}})+\frac{4C}{\sqrt{N}}+2C\sqrt{\frac{2 \ln (2 / \delta)}{N}}\label{mg}
%    \end{align}
            \begin{align}
                &r\left(\hat{h}_{\mathcal{M}}\circ \hat{g}_{\mathcal{M}}\right) - r\left(\hat{h}_{\mathcal{N}}\circ \hat{g}_{\mathcal{N}}\right) \leq  \nonumber \\
                &\gamma_{\mathcal{G}}(\mathcal{M},\mathcal{N}) + 8L\mathfrak{R}_{N}( \mathcal{H}\circ\mathcal{G}_{\mathcal{M}}) + \frac{4C}{\sqrt{N}} + 2C\sqrt{\frac{2 \ln (2 / \delta)}{N}} \label{mg}
            \end{align}
 
             where 
       \begin{align}&\gamma_{\mathcal{G}}(\mathcal{M},\mathcal{N})\triangleq \eta(\hat{g}_{\mathcal{M}})-\eta(\hat{g}_{\mathcal{N}})\label{gamma} \qquad\Box
       \end{align}
       \end{theorem}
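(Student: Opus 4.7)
The plan is to mimic the standard two-sample Rademacher-complexity proof for multi-view ERM, with the discrepancy term $\gamma_{\mathcal{G}}(\mathcal{M},\mathcal{N})$ absorbing everything that cannot be controlled by uniform convergence on the $\mathcal{M}$-side alone. First I would decompose the target gap via a four-term telescoping identity:
\begin{align*}
r(\hat h_\mathcal{M}\!\circ\!\hat g_\mathcal{M}) - r(\hat h_\mathcal{N}\!\circ\!\hat g_\mathcal{N})
&= \bigl[r(\hat h_\mathcal{M}\!\circ\!\hat g_\mathcal{M}) - \hat r(\hat h_\mathcal{M}\!\circ\!\hat g_\mathcal{M})\bigr] \\
&\quad + \bigl[\hat r(\hat h_\mathcal{M}\!\circ\!\hat g_\mathcal{M}) - \hat r(\hat h_\mathcal{N}\!\circ\!\hat g_\mathcal{N})\bigr] \\
&\quad + \bigl[\hat r(\hat h_\mathcal{N}\!\circ\!\hat g_\mathcal{N}) - r(\hat h_\mathcal{N}\!\circ\!\hat g_\mathcal{N})\bigr].
\end{align*}
The middle bracket is where the ERM property of $(\hat h_\mathcal{M},\hat g_\mathcal{M})$ on the $\mathcal{M}$-view empirical risk enters: by optimality it cannot exceed $\hat r$ evaluated at the best $\mathcal{M}$-predictor that merely reuses $\hat g_\mathcal{N}$, and the residual gap between that predictor's risk and $\hat r(\hat h_\mathcal{N}\!\circ\!\hat g_\mathcal{N})$ is precisely what the encoder informativeness gap $\eta(\hat g_\mathcal{M}) - \eta(\hat g_\mathcal{N})$ is designed to upper-bound. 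This yields the $\gamma_\mathcal{G}(\mathcal{M},\mathcal{N})$ contribution.

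Next I would control the first bracket by standard symmetrization: passing to a ghost sample, introducing Rademacher variables, and then applying Talagrand's contraction principle to peel off the $L$-Lipschitz loss, giving $2L\mathfrak{R}_N(\mathcal{H}\circ\mathcal{G}_\mathcal{M})$ in expectation. To turn this into a high-probability statement for the supremum over $\mathcal{H}\circ\mathcal{G}_\mathcal{M}$, I would invoke McDiarmid's inequality with bounded differences $2C/N$ (losses live in $[0,C]$), which contributes $C\sqrt{2\ln(2/\delta)/N}$ after allocating $\delta/2$ of the failure probability. The constant-factor blow-up from $2L\mathfrak R_N$ to $8L\mathfrak R_N$ and from $C/\sqrt N$-type terms to $4C/\sqrt N$ is the usual artifact of converting the expected Rademacher bound into a worst-case one and of combining the symmetrization slack with the concentration step.

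The third bracket is handled more cheaply: since $(\hat h_\mathcal{N},\hat g_\mathcal{N})$ is a fixed predictor once we condition on the $\mathcal{N}$-data (or can be thought of as a single function evaluated on $N$ i.i.d.\ samples), a one-sided Hoeffding bound gives $C\sqrt{2\ln(2/\delta)/N}$ with the remaining $\delta/2$ budget, which combines with the symmetric piece from the first bracket to yield the $2C\sqrt{2\ln(2/\delta)/N}$ written in the statement. A union bound over the two high-probability events produces the overall $1-\delta/2$ guarantee after the bookkeeping is adjusted.

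The main obstacle I anticipate is the middle step: identifying the correct comparator that lets the ERM optimality fire and then showing that the resulting excess is exactly $\eta(\hat g_\mathcal{M})-\eta(\hat g_\mathcal{N})$ rather than something larger. This requires a clean operational definition of $\eta$ (presumably as the best-achievable $h$-risk conditional on the encoder) so that swapping $\hat g_\mathcal{N}$ into the $\mathcal{M}$-side ERM inequality produces only an $\eta$-gap plus a term that the uniform-convergence machinery already handles; the rest of the derivation is largely bookkeeping of constants and the $\delta/2$ split.
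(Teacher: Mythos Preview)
The paper does not actually prove this theorem. The statement is presented as a result ``inspired by'' Huang et al.\ (2021), ends with a $\Box$ inside the displayed definition of $\gamma_{\mathcal G}$, and is followed only by a three-paragraph \textbf{Remark} interpreting the bound; no argument, sketch, or appendix proof appears anywhere in the manuscript. So there is nothing to compare your proposal against.

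On its own merits, your decomposition into two population--empirical gaps plus an empirical comparison, followed by symmetrization, Talagrand contraction, and McDiarmid/Hoeffding, is the standard route to bounds of exactly this shape and is almost certainly what the cited reference does. Your identified obstacle is real and is in fact a gap in the paper itself: none of $r$, $\hat r$, $\eta$, $L$, $C$, $\mathcal H$, or $\mathcal G_{\mathcal M}$ is defined here, so the claim that the middle bracket reduces to $\eta(\hat g_{\mathcal M})-\eta(\hat g_{\mathcal N})$ cannot be checked without consulting Huang et al.\ for the operational definition of $\eta$. Modulo those undefined quantities, your plan is structurally sound.
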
   
       
\noindent \textbf{Remark.} 
First, $\gamma_{\mathcal{G}}(\mathcal{M},\mathcal{N})$ of $(\ref{gamma})$ trades off the quality between latent representations learning from $\mathcal{M}$ and $\mathcal{N}$ views with 
respect to the graph dataset $\mathcal{G}$. Theorem \ref{thm:mn-modality}  bounds the difference of population risk training with two different subsets of views with 
$\gamma_{\mathcal{G}}(\mathcal{M},\mathcal{N})$, which validates our intuition that more views is superior.
    Second, for the commonly used function classes, Radamacher complexity for a node of size $N$, $\mathfrak{R}_{N}(\mathcal{F})$ is generally bounded by 
     $\sqrt{C(\mathcal{F}) / N}$, where $C(\mathcal{F})$ represents the intrinsic property of function class $\mathcal{F}$. 
    Third, $(\ref{mg})$ can be written as $\gamma_{\mathcal{G}}(\mathcal{M},\mathcal{N})+\mathcal{O}(\sqrt{\frac{1}{N}})$ in order terms. This indicates that as the number of node increases,
    the performance with different views mainly depends on its latent representation quality. %\longbo{these insights are interesting. should highlight in the contributions}
%\end{remark}

\subsection{Node Sampling}
We compute the cosine similarity matrix of \(\boldsymbol{H}_{\theta}\) and \(\boldsymbol{H}_{\varphi}\) separately to obtain the \( k \)-NN sets.
Specifically, we obtain the \(k\)-NN node set \(\mathcal{B}_i^{\theta}\) of node \(v_i\) with the cosine similarity between \( \bm{h}_i^\theta \) and \(\boldsymbol{H}_{\theta}\). 
Similarly, we obtain the \(k\)-NN node set \(\mathcal{B}_i^{\varphi}\) for node \(v_i\) with the cosine similarity between \( \bm{h}_i^\varphi \) and \(\boldsymbol{H}_{\varphi}\).
Furthermore, we compute the topological \(k\)-NN matrix via nodes sorting according to the number of hops from other nodes to node \(v_i\) in ascending order, resulting in the topological \(k\)-NN set \(\mathcal{T}_i\) for node \(v_i\).
Finally, we obtain the positive set \(\mathcal{P}_i\) for node \(v_i\) from the intersection of \(\mathcal{B}_i^{\theta}\), \(\mathcal{B}_i^{\varphi}\), and \(\mathcal{T}_i\):

\begin{equation}
    \label{positive}
    \mathcal{P}_i = \mathcal{B}_i^{\theta} \cap \mathcal{B}_i^{\varphi} \cap \mathcal{T}_i
    \end{equation}

Meanwhile, we treat all other nodes as negative pairs.
\begin{equation}
    \label{negative}
    \mathcal{N}_i = \mathcal{V} \setminus (\mathcal{P}_i \cup \{v_i\})
    \end{equation}

Compared with existing augmentation-based contrastive learning models, GTCA has several advantages. 
First, GTCA discards random or heuristic augmentation strategies which may disturb the graph structure. 
Second, in contrast to existing GCL methods that utilize shared-weight GNNs, GTCA uses GNN and Transformer encoders to learn diverse graph feature representations.

\subsection{Contrastive Loss Function}
According to \cite{InfoNCE}, the InfoNCE is a lower bound of the true Mutual Information (MI).
Recent GCL methods use InfoNCE as the loss function \cite{GRACE,spectral,architecture,provable}.
Figure~\ref{fig:InfoNCE} gives an example of the InfoNCE loss. 
For a given anchor node, InfoNCE uses only one node to generate a positive pair and utilizes all the other nodes to generate negative pairs.
This may lead to ignorance of the potential information of similar nodes.
\begin{figure}[htbp]
    \centering
    \includegraphics[height=0.7\columnwidth]{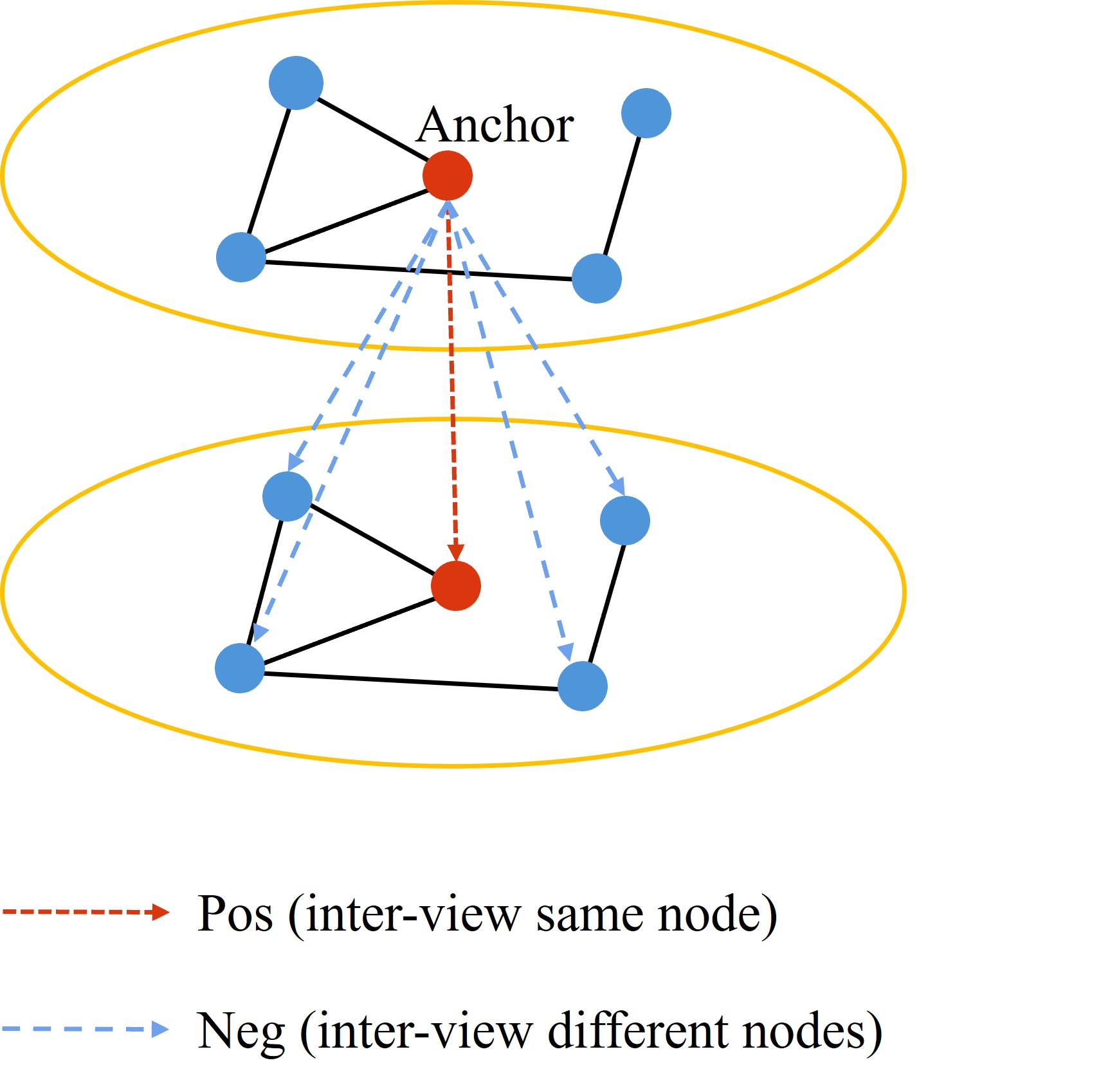} 
    \caption{The red nodes denote the positive pair and the blue dotted lines with arrows are negative pairs of the anchor.}
    \label{fig:InfoNCE}
    %\vspace{-2ex} 
\end{figure}

To address this issue, we design a novel contrastive loss function. 
Let \( \bm{h}_i^\theta \) and \( \bm{h}_i^\varphi \) denote the \( \ell_2 \)-normalized GCN and NodeFormer embeddings of \( v_i \) respectively.
Select \( \bm{h}_i^\theta \) as the anchor node, its positives come from three sources: 
(1) its NodeFormer embedding \( \bm{h}_i^\varphi \); 
(2) GCN embeddings $\{ \bm{h}_j^\theta \mid v_j \in \mathcal{P}_i \}$; 
(3) NodeFormer embeddings $\{ \bm{h}_j^\varphi \mid v_j \in \mathcal{P}_i \}$.
That is, the total number of positive pairs associated with node \( v_i \) is \( 2|\mathcal{P}_i| + 1 \). 
Ultimately, the contrastive loss function is as follows:

\begin{equation}
    \label{loss_i}
    \resizebox{\linewidth}{!}{$
    \ell(\bm{h}_i^\theta) = -\log  \frac{e^{s(\bm{h}_i^\theta, \bm{h}_i^\varphi) / \tau} + \sum_{j \in \mathbf{P}_i} (e^{s(\bm{h}_i^\theta, \bm{h}_j^\theta) / \tau} +  e^{s(\bm{h}_i^\theta, \bm{h}_j^\varphi) / \tau})}{e^{s(\bm{h}_i^\theta, \bm{h}_i^\varphi) / \tau} + \sum_{j \ne i} (e^{s(\bm{h}_i^\theta, \bm{h}_j^\theta) / \tau} +  e^{s(\bm{h}_i^\theta, \bm{h}_j^\varphi) / \tau})} 
   $}
    \end{equation}
where \( s(\cdot, \cdot) \) is the cosine similarity and \( \tau \) is a temperature parameter.
For a given query node $v_i\in\mathcal{V}$, we compute the cosine similarity as follows:
%\vspace{-1ex}
\begin{eqnarray}
\small
s(\bm{h}_i^\theta, \bm{h}_i^\varphi) = \frac{\bm{h}_i^\theta \cdot \bm{h}_i^\varphi}{\|\bm{h}_i^\theta\|\|\bm{h}_i^\varphi\|}, \forall v_j\in\mathcal{V}
\end{eqnarray}
The last two terms of the denominator of Equation~(\ref{loss_i}) are decomposed as:

\begin{equation}
    \resizebox{\linewidth}{!}{$
\sum_{j \ne i} e^{s(\bm{h}_i^\theta, \bm{h}_j^\theta) / \tau} =  
\sum_{j \in \mathbf{P}_i} e^{s(\bm{h}_i^\theta, \bm{h}_j^\theta) / \tau} + 
\sum_{j \in \mathbf{N}_i} e^{s(\bm{h}_i^\theta, \bm{h}_j^\theta) / \tau}
    $}
    \end{equation}

%\vspace{-1.5em} % Adjust vertical space between equations

\begin{equation}
    \resizebox{\linewidth}{!}{$
\sum_{j \ne i} e^{s(\bm{h}_i^\theta, \bm{h}_j^\varphi) / \tau} =  
\sum_{j \in \mathbf{P}_i} e^{s(\bm{h}_i^\theta, \bm{h}_j^\varphi) / \tau} + 
\sum_{j \in \mathbf{N}_i} e^{s(\bm{h}_i^\theta, \bm{h}_j^\varphi) / \tau}
$}
\end{equation}

The minimization of Equation~(\ref{loss_i}) will pull positive pairs closer and push negative pairs away.
Since the two views are symmetric, we define the loss for the feature embedding \( \bm{h}_i^\varphi \) corresponding to node \( v_i \) in the other view similarly to Equation~(\ref{loss_i}).
Then, the final loss function is as follows:
\begin{equation}
    \label{Loss}
\mathcal{L} = \frac{1}{2N} \sum_{i=1}^{N} \left[ \ell(\bm{h}_i^\theta) + \ell(\bm{h}_i^\varphi) \right]
\end{equation}

\begin{proposition}\label{prop:prop}
    Given a graph \(\mathcal{G} = (\mathcal{V}, \mathcal{E})\), encoders $f_\theta$, $g_\varphi$ and 
    a contrastive loss function $\mathcal{L}$ defined in Equation~(\ref{Loss}).
    \(\mathcal{B}_i^{\theta}\), \(\mathcal{B}_i^{\varphi}\), and \(\mathcal{T}_i\) are \( k \)-NNs of node \(v_i\), \(i = 1, 2, \cdots, N \) under two node representation views and one topology structure view.
    Equation~(\ref{positive}) generates the most trustworthy positive pairs for GCL. 
       \end{proposition}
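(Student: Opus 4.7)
The plan is to treat ``trustworthiness'' as the posterior precision of the selected positive set --- the probability that a node placed in $\mathbf{P}_i$ is genuinely a semantic neighbour of the anchor $v_i$ --- and then argue that intersecting three complementary views simultaneously maximizes this precision at the sample level and reduces the population contrastive risk via Theorem~\ref{thm:mn-modality}.

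First I would formalize the notion. Let $Y_{ij}\in\{0,1\}$ indicate that $v_j$ is a true latent-class/semantic neighbour of $v_i$, and for each view $v\in\{\theta,\varphi,T\}$ (writing $\mathbf{B}_i^T:=\mathbf{T}_i$) define the view-wise precision $p_v = \Pr(Y_{ij}=1\mid v_j\in\mathbf{B}_i^v)$ and the false-positive rate $q_v = \Pr(v_j\in\mathbf{B}_i^v\mid Y_{ij}=0)$. The three views are built from heterogeneous signals --- GCN aggregates local structure plus features, NodeFormer captures global attention-based dependencies, and $\mathbf{T}_i$ uses pure hop-distance topology --- so their misclassification events are either conditionally independent or positively associated given $Y_{ij}$. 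A direct Bayes-rule computation under this assumption yields
\begin{equation*}
\Pr(Y_{ij}=1\mid v_j\in\mathbf{P}_i)\ \geq\ \max\{p_\theta,p_\varphi,p_T\},
\end{equation*}
so no single-view or two-view $k$-NN choice can dominate $\mathbf{P}_i$ in expected precision, which establishes the proposition at the sampling level.

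Next I would lift sample-level precision to loss-level trustworthiness. The numerator of Equation~(\ref{loss_i}) pulls $\bm{h}_i^\theta$ toward every node declared positive, so any false positive in $\mathbf{P}_i$ drags the embedding toward a semantically unrelated node, while any missed true positive is retained in $\mathbf{N}_i$ and acts as a false negative in the denominator; both perturbations strictly increase the population contrastive risk. Invoking Theorem~\ref{thm:mn-modality} with $\mathcal{M}$ equal to the full three-view set and $\mathcal{N}$ any strict subset, the quality gap $\gamma_{\mathcal{G}}(\mathcal{M},\mathcal{N})=\eta(\hat{g}_\mathcal{M})-\eta(\hat{g}_\mathcal{N})$ is non-positive because adding a mutually reinforcing view improves the latent-representation quality $\eta(\hat g_\mathcal{M})$, and the $\mathcal{O}(1/\sqrt{N})$ residual in (\ref{mg}) is uniform, so the three-view intersection continues to dominate in finite samples.

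The principal obstacle I anticipate is justifying the (conditional) independence, or at least positive association, of the three view-level membership indicators, since GCN embeddings and the topology $\mathbf{T}_i$ are correlated through message passing on the shared $\bm{A}$. I would defuse this either by invoking an FKG-type positive-association assumption on $\{\mathbf{1}[v_j\in\mathbf{B}_i^v]\}_v$ given $Y_{ij}$, which is sufficient for the precision inequality above, or by conditioning on the latent class of $v_i$ and arguing that once the class is fixed the residual intra-view randomness (parameter initialisation, Gumbel noise in NodeFormer, and the choice of $k$) decouples. A secondary, routine subtlety is the degenerate case $\mathbf{P}_i=\emptyset$: I would dispatch it with a short choice-of-$k$ lemma showing that for $k$ growing slowly with $N$ the intersection is non-empty with high probability, so the precision bound remains non-vacuous.
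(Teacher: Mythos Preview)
Your approach is substantially more elaborate than the paper's and takes a genuinely different route. The paper's proof is a three-line monotonicity observation: among the candidate positive sets built from $\mathbf{B}_i^\theta,\mathbf{B}_i^\varphi,\mathbf{T}_i$, the intersection is the smallest; a smaller $\mathbf{P}_i$ makes the numerator of Equation~(\ref{loss_i}) smaller while the denominator is unchanged, so $\ell(\bm{h}_i^\theta)$ is an upper bound over the alternative choices; minimizing this upper bound is then declared ``trustworthy.'' There is no probabilistic model, no precision/recall framing, and Theorem~\ref{thm:mn-modality} is not invoked at all. What the paper's argument buys is brevity and freedom from any distributional assumption; what your argument buys is an actual semantic interpretation of ``trustworthy'' as high posterior precision $\Pr(Y_{ij}=1\mid v_j\in\mathbf{P}_i)$, which is arguably what one would want the word to mean.

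There is, however, a genuine gap in your second step. Theorem~\ref{thm:mn-modality} compares \emph{encoders} trained on two view subsets $\mathcal{M},\mathcal{N}$; it says nothing about different rules for selecting positive pairs once the encoders are fixed. You are repurposing it by identifying ``three views'' with ``three $k$-NN membership indicators,'' but the objects $\hat g_{\mathcal{M}},\hat g_{\mathcal{N}}$ and the quantity $\eta(\cdot)$ in (\ref{gamma}) refer to learned representations, not to sampling rules. Moreover, your claim that $\gamma_{\mathcal{G}}(\mathcal{M},\mathcal{N})\le 0$ because ``adding a mutually reinforcing view improves $\eta(\hat g_{\mathcal{M}})$'' is precisely what Theorem~\ref{thm:mn-modality} does \emph{not} assert: it only bounds the risk gap \emph{by} $\gamma_{\mathcal{G}}$ plus $\mathcal{O}(1/\sqrt{N})$, without sign-controlling $\gamma_{\mathcal{G}}$ itself. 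So the lift from sample-level precision to loss-level dominance does not go through as written; either supply a direct argument that higher positive-set precision lowers the population contrastive risk (plausible, but not a corollary of Theorem~\ref{thm:mn-modality}), or stay at the precision level, which already matches the informal content the proposition is after.
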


\begin{algorithm}[H]
\caption{GTCA}
\label{alg:algorithm}
\textbf{Input}: The adjacency matrix \(\bm{A}\), the feature matrix \(\bm{X}\), and the number of training epochs \(J\)\\
\textbf{Output}: Feature matrix \( \boldsymbol{H} \)
\begin{algorithmic}[1] %[1] enables line numbers
\FOR{epoch in 1 to \(J\)}
    \STATE Generate GNN embeddings \(\boldsymbol{H}_{\theta}\) and NodeFormer embeddings \(\boldsymbol{H}_{\varphi}\) with GNN encoder \(f_\theta\) , NodeFormer encoder \(g_\varphi\), adjacency matrix \(\bm{A}\) and feature matrix \(\bm{X}\);  
    \STATE Generate GNN \(k\)-NN node set \(\mathcal{B}_i^{\theta}\), Nodeformer \(k\)-NN node set \(\mathcal{B}_i^{\varphi}\) and topological \(k\)-NN set \(\mathcal{T}_i\) with \(\boldsymbol{H}_{\theta}\) , \(\boldsymbol{H}_{\varphi}\) and adjacency matrix \(\bm{A}\);
    \STATE Calculate positive pairs \( \mathcal{P}_i \) and negative pairs \( \mathcal{N}_i \), \(i = 1, \cdots, N \) with Equation~(\ref{positive}) and Equation~(\ref{negative});
    \STATE Compute loss \(\mathcal{L}\) with Equation~(\ref{Loss});
    \STATE Apply gradient descent to minimize \(\mathcal{L}\) and update parameters; 
\ENDFOR
\STATE Calculate the final output feature matrix with Equation~(\ref{output});
\STATE \textbf{teturn} \(\boldsymbol{H}\) for downstream tasks;
\end{algorithmic}
\end{algorithm}

\begin{proof}
    It is obvious that the intersection set contains the minimum number of positive pairs with specific \(\mathcal{B}_i^{\theta}\), \(\mathcal{B}_i^{\varphi}\), and \(\mathcal{T}_i\).
    The numerator of  Equation~(\ref{loss_i}) sums the exponential calculation with respect to \(\mathcal{P}_i\), 
    whereas the denominator sums the exponential calculation with respect to both \(\mathcal{P}_i\) and \(\mathcal{N}_i\).
    Therefore, when calculating \(\mathcal{P}_i\) with Equation~(\ref{positive}),
    we get the lower bound of the numerator, and also the upper bound of the contrastive loss in Equation~(\ref{loss_i}).
    When we minimize the upper bound of the loss in Equation~(\ref{Loss}), trustworthy GCL can be achieved. 
    \end{proof}
At each training epoch, GTCA first generates two node feature representation matrices \(\boldsymbol{H}_{\theta}\) and \(\boldsymbol{H}_{\varphi}\) with GCN encoder \(f_{\theta}\) and NodeFormer encoder \(g_{\varphi}\), respectively. 
Then, we calculate the positive and negative pairs with Equation~(\ref{positive}) and Equation~(\ref{negative}).
Finally, we minimize the objective in Equation~(\ref{Loss}) to update the parameters of \(f_{\theta}\) and \(g_{\varphi}\). 
We obtain two trained feature matrices, \( \boldsymbol{H}_{\theta}^{\prime} \) and \( \boldsymbol{H}_{\varphi}^{\prime} \). 
The final output feature matrix \( \boldsymbol{H} \) is obtained with the normalized feature matrices \( \boldsymbol{H}_{\theta}^{\prime} \) and \( \boldsymbol{H}_{\varphi}^{\prime} \) according to a weight parameter \( \lambda \):

\begin{equation}
    \label{output}
    \boldsymbol{H} = \lambda \cdot \boldsymbol{H}_{\theta}^{\prime} + (1 - \lambda) \cdot \boldsymbol{H}_{\varphi}^{\prime}
\end{equation}
The final output feature matrix \( \boldsymbol{H} \), a linear combination of \(\boldsymbol{H}_{\theta}^{\prime}\) and \(\boldsymbol{H}_{\varphi}^{\prime}\), is applied to downstream node classification tasks.
\(\lambda\) is a tunable weight parameter.
Algorithm~\ref{alg:algorithm} summarizes the overall procedure of GTCA.

\section{Experiments}
\subsection{Datasets}
To validate the effectiveness of the GTCA method, we perform extensive experiments on 5 benchmark datasets for node classification
including a commonly used citation network, i.e., Cora~\cite{sen2008collective}, a reference network constructed based on
Wikipedia, i.e., Wiki-CS~\cite{wiki}, a co-authorship network, i.e., Coauthor-CS~\cite{shchur2018pitfalls}, 
and two product co-purchase networks, i.e., Amazon-Computers and Amazon-Photo~\cite{shchur2018pitfalls}. 
We list the detailed statistics of these datasets in Table~\ref{tab:datasets}.

\begin{table}[h]
    \centering
    \small
    %\scriptsize % Use smaller font size
    \setlength{\tabcolsep}{4pt} % Further reduce column width
    \renewcommand{\arraystretch}{1.2} % Adjust row height to compact
    \begin{tabular}{cccccc}
        \hline
        Datasets & \# Nodes & \# Edges & \# Features & \# Labels \\
        \hline
        Cora          & 2,708  & 5429 & 1,433  & 7 \\
        Wiki-CS      & 11,701  &  216,123   &  300  & 10 \\
        Coauthor-CS   & 18,333 & 81894 & 6,805  & 15 \\
        Amazon-Computers  & 13,752 & 245,861 &  767  & 10 \\
        Amazon-Photo  & 7,650  & 119081 & 745    & 8 \\
        \hline
    \end{tabular}
    \caption{Statistics of the datasets.}
    \label{tab:datasets}
\end{table}
%        Chameleon      & 2,277  & 3,6101   & 2,325  & 5 \\				
%        Squirrel        & 5,201 &  216,933 & 2,089    & 5 \\				

\subsection{Baselines}
We compare GTCA with 12 state-of-the-art methods, including 2 semi-supervised GNNs, i.e., GCN~\cite{kipf2016semi} and
GAT~\cite{GAT}, 2 semi-supervised GCL methods, i.e., CGPN~\cite{CGPN} and CG3~\cite{CG3}, and 8 self-supervised GCL methods, i.e., DGI~\cite{DGI}, GMI~\cite{GMI}, MVGRL~\cite{MVGRL}, GRACE~\cite{GRACE},
GCA~\cite{GCA}, SUGRL~\cite{SUGRL}, AFGRL~\cite{AFGRL} and NCLA~\cite{NCLA}.

\subsection{Experimental Settings}
We employ the above methods for node classification. 
For Cora dataset, we follow \cite{labelset3} to randomly select 20 nodes per class for training, 500 nodes for validation, and the remaining nodes for testing. 
For Wiki-CS, Coauthor-CS, Amazon-Computers and Amazon-Photo datasets, we follow \cite{labelset4} to randomly select 20 nodes per class for training, 30 nodes per class for validation, and the remaining nodes for testing. 
We perform 20 random splits of training, validation, and testing on each dataset and report the average performance of all algorithms across these splits.
All experiments are implemented in PyTorch and conducted on a server with NVIDIA GeForce 3090 (24GB memory each). 
Table~\ref{tab:hyperparameter} shows the hyperparameter settings of GTCA on 5 datasets.

\begin{table}[H]
    \centering
    \setlength{\tabcolsep}{9pt} % Adjust column separation
    \renewcommand{\arraystretch}{1.2} % Adjust row height (减小行高)
    %\vspace{-0.2cm} % Reduce space above the table
    \begin{tabular}{ccccc}
        \hline
        Datasets  &  \( k \) & \( E \) &   $\lambda$  &   \( lr \) \\
        \hline
        Cora          & 520  & 440  & 0.7  & 0.005 \\
        Wiki-CS        & 500 & 400  & 0.8    & 0.001 \\
        Coauthor-CS   & 240 & 420 & 0.4  & 0.001 \\
        Amazon-Computers      & 550  & 512   & 0.8  & 0.001 \\
        Amazon-Photo  & 510  & 512 & 0.7    & 0.001 \\
        \hline
    \end{tabular}
    \caption{Hyperparameter settings of GTCA on 5 datasets. \( lr \) is the learning rate.}
    \label{tab:hyperparameter}
\end{table} 

\begin{table*}[t]
    \centering
    \renewcommand{\arraystretch}{1.1} % Adjust row height (减小行高)
    \begin{tabular}{ccccccccc}
    \hline
    Methods & Cora & Wiki-CS & Coauthor-CS & Amazon-Computers & Amazon-Photo \\
    \hline
    GCN & 79.6±1.8 & 67.3±1.5 & 90.0±0.6 & 76.4±1.8 & 86.3±1.6 \\
    GAT & 81.2±1.6 & 68.6±1.9 & 90.9±0.7 & 77.9±1.8 & 86.5±2.1 \\
    CGPN & 74.0±1.7 & 66.1±2.1 & 83.5±1.4 & 74.7±1.3 & 84.1±1.5 \\
    CG3 & 80.6±1.6 & 68.0±1.5 & 90.6±1.0 & 77.8±1.7 & 89.4±1.9 \\
    DGI & 82.1±1.3 & 69.1±1.4 & \underline{92.0±0.5} & 78.8±1.1 & 83.5±1.2 \\
    GMI & 79.4±1.2 & 67.8±1.8 & 88.5±0.8 & 76.1±1.2 & 86.7±1.5 \\
    MVGRL & \underline{82.4±1.5} & 69.2±1.2 & 91.5±0.6 & 78.7±1.7 & 89.7±1.2 \\
    GRACE & 79.6±1.4 & 67.8±1.4 & 90.0±0.7 & 76.8±1.7 & 87.9±1.4 \\
    GCA & 79.0±1.4 & 67.6±1.3 & 90.9±1.1 & 76.9±1.4 & 87.0±1.9 \\
    SUGRL & 81.3±1.2 & 68.7±1.1 & 91.2±0.9 & 78.2±1.2 & \underline{90.5±1.9} \\
    AFGRL & 78.6±1.3 & 68.0±1.7 & 91.4±0.6 & 77.7±1.1 & 89.2±1.1 \\
    NCLA & 82.2±1.6 & \textbf{70.3±1.7} & 91.5±0.7 & \textbf{79.8±1.5} & 90.2±1.3 \\
    GTCA & \textbf{82.5±1.3} & \underline{69.7±1.5} & \textbf{92.5±0.6} & \underline{79.2±1.4} & \textbf{90.5±1.2} \\
    \hline
    \end{tabular}
    \caption{Node classification accuracy (\%) comparison on 5 datasets.}
    \label{tab:performance_comparison}
\end{table*}

\subsection{Node Classification Results}
Table~\ref{tab:performance_comparison} shows node classification accuracy on 5 benchmark graph datasets. 
On the whole, GTCA demonstrates superior performance across all 5 datasets. 
GTCA ranks first on three datasets and ranks second on the other two datasets.

The remarkable performance of GTCA can be attributed to two key aspects. 
First, GTCA introduces an augmentation-free strategy, which avoids the potential risk of disturbing the underlying semantics of graphs. 
Second, GTCA employs a novel sampling strategy to generate trustworthy positive pairs and negative pairs.

\subsection{Ablation Study}
Table~\ref{tab:Ablation} presents the ablation study results of GTCA and its variants without \(\mathcal{T}_i\) and with different graph encoders including GNNs $f_\theta, f_\varphi$ and NodeFormers $g_\theta, g_\varphi$. We observe that there is an obvious decline in performance without topology structure view on 5 datasets. 
This indicates that topological information is essential for enhancing node classification accuracy. In terms of graph encoders, compared with GTCA, the use of either GCN-GCN or NodeFormer-NodeFormer results in inferior performance, which validates the reasonability of the module design. Meanwhile, we can observe that 
\(\text{GTCA}\_{g_\theta, g_\varphi}\) ranks recond on the Coauthor-CS dataset, which has the largest number of nodes among the 5 datasets, due to the fact that Transformer-based models can effectively capture long-range dependencies.

%原始的变两行
\begin{table}[H]
    %\vspace{-8pt}
    \centering
    \small % Use smaller font size to make the table more compact
    \setlength{\tabcolsep}{2pt} % Further reduce column width
    \renewcommand{\arraystretch}{1} % Adjust row height to compact
    \begin{tabular}{lccccc}
        \hline
        & \makecell{Cora} & \makecell{Wiki-CS} & \makecell{Coauthor-\\CS} & \makecell{Amazon-\\Computers} & \makecell{Amazon-\\Photo} \\
        \hline
        GTCA w/o \(\mathcal{T}_i\) & 79.8 & 69.0 & 92.0 & 75.7 & 86.1 \\
        \text{GTCA}\_{$f_\theta, f_\varphi$} & 80.4 & 69.1 & 90.0 & 78.1 & 90.1 \\
        \text{GTCA}\_{$g_\theta, g_\varphi$} & 77.6 & 68.8 & 92.1 & 76.9 & 88.8 \\
        GTCA & \textbf{82.5} & \textbf{69.7} & \textbf{92.5} & \textbf{79.2} & \textbf{90.5} \\
        \hline
    \end{tabular}
    \caption{Node classification accuracy (\%) of different components on 5 datasets.}
    \label{tab:Ablation}
    %\vspace{-10pt}
\end{table}

\subsection{Graph Encoders Analysis}
Figure~\ref{fig:ratio} shows that with a single graph encoder, correct ratio of positive pairs is lower than 40\%.
With both GCN and NodeFormer as graph encoders, the correct ratio of positive pairs varies from 30\% to 60\%.
When GTCA exploits intersection of the \( k \)-NN neighborhood of GCN, Nodeformer node representation and topology structure views,
the correct ratio of positive pairs exceeds 80\%.  
This highlights the soundness of the current graph encoder design. 

\begin{figure}[h]
    \centering
    \includegraphics[width=1\columnwidth]{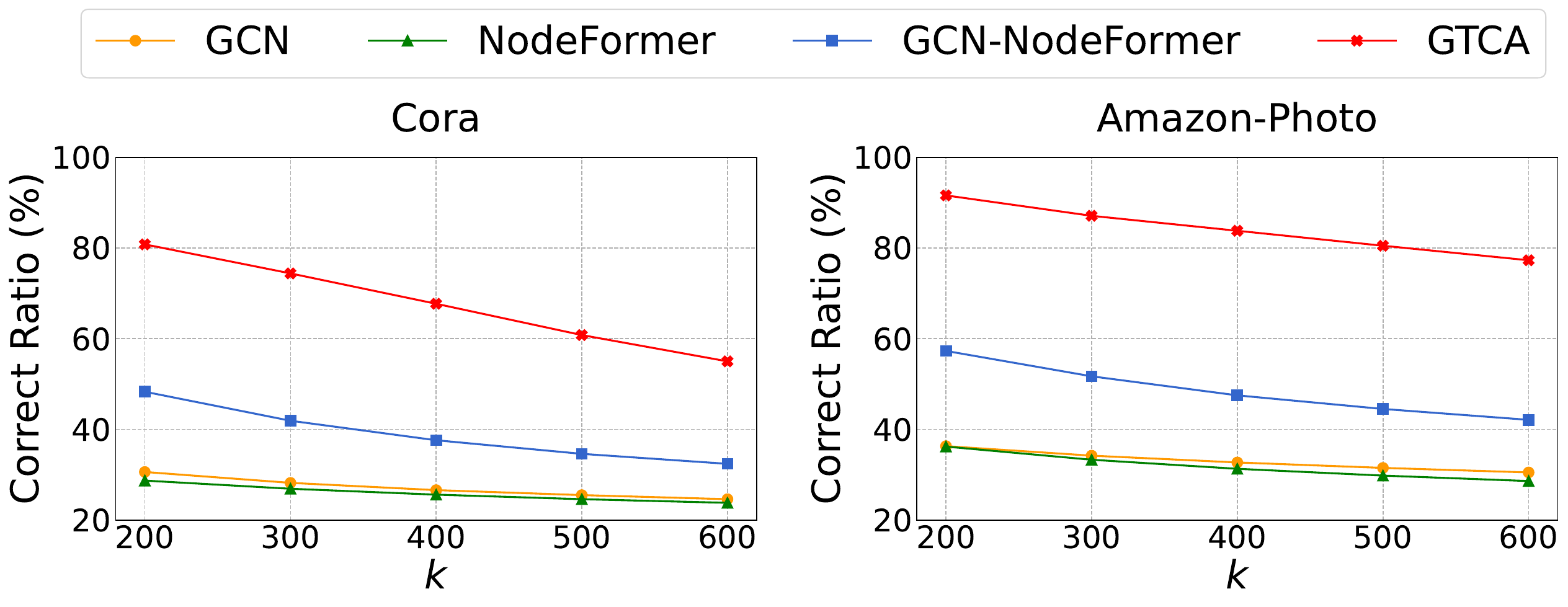}
    \caption{Correct ratio of positive pairs with various \( k \) values on Cora and Amazon-Photo datasets.}
    \label{fig:ratio}
    \vspace{-2ex} 
\end{figure}

\begin{figure*}[t]
    \centering
    \includegraphics[width=1.3\columnwidth]{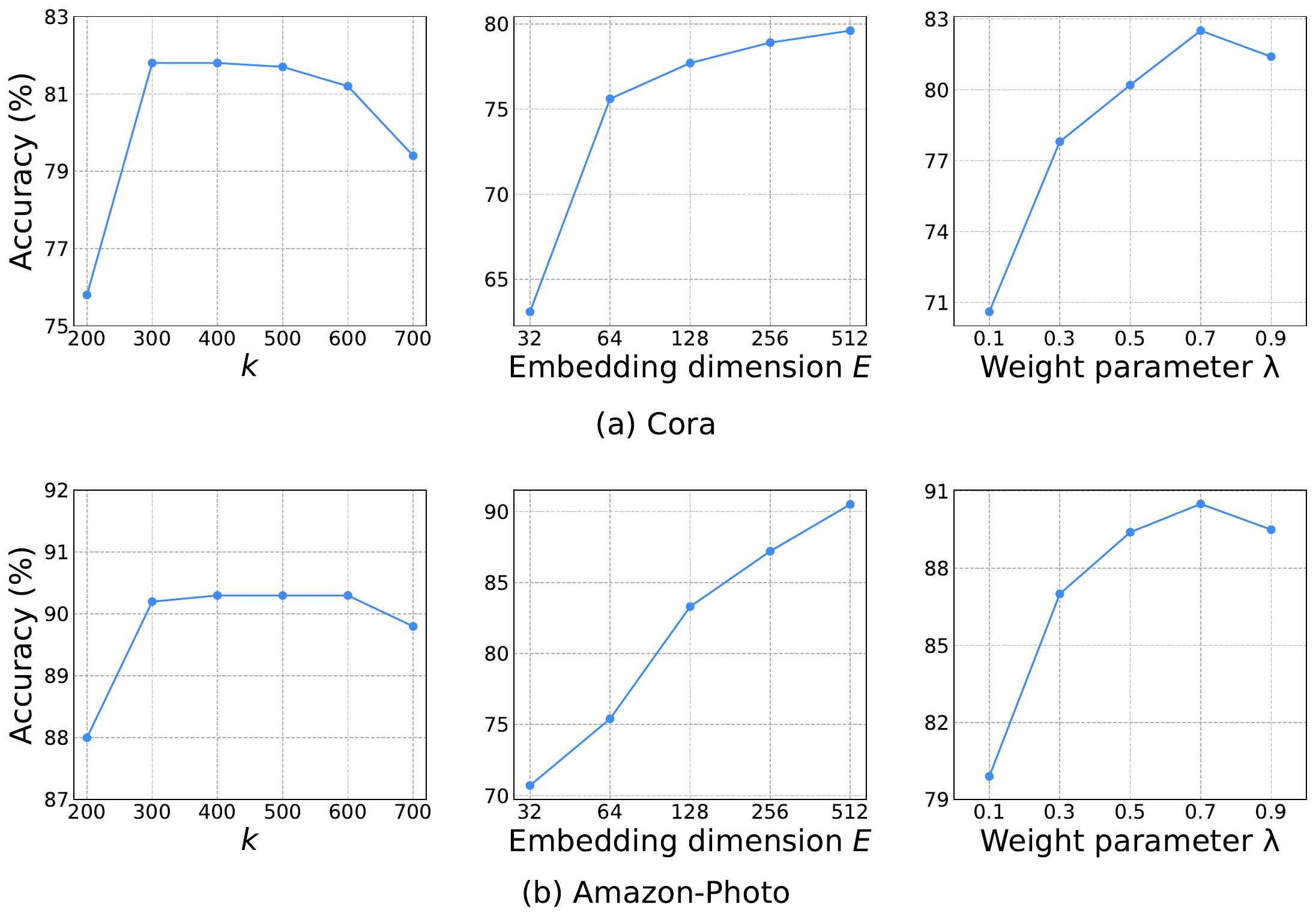} % 设置图形宽度为单栏宽度的一半cora-photo.pdf
    \caption{Accuracy vs. hyperparameters \( k \), \( E \) and $\lambda$ on Cora and Amazon-Photo datasets.}
    \label{fig:cora-photo}
    %\vspace{-1ex} % 调整图形与上下文本的垂直间距
\end{figure*}

\begin{figure*}[h]  % "t" ensures the figure stays at the top of the page
    \centering
    \includegraphics[width=1.8\columnwidth]{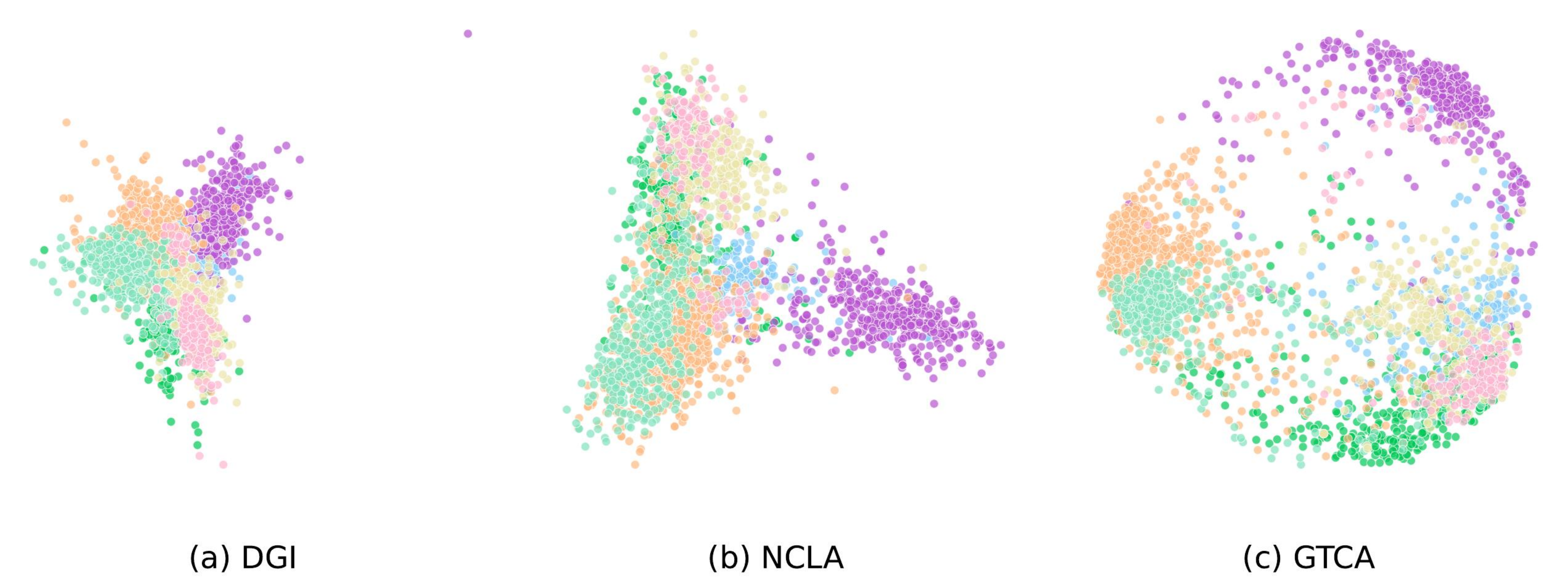}
    \caption{Visualization of DGI, NCLA and GTCA embeddings on Cora dataset with PCA.}
    \label{fig:tsne}
    %\vspace{-2ex}  % Optional: Reduce space below the figure
\end{figure*}

\subsection{Sensitivity Analysis}
Figure~\ref{fig:cora-photo} illustrates the node classification accuracy of GTCA on Cora and Amazon-Photo datasets with various hyperparameters, respectively. 
The optimal range of \( k \) is between 300 and 600.
When \( k \) is too small, GTCA has inferior performance due to a scarcity of positive pairs. 
Conversely, a large \( k \) introduces too much redundant information.
Thus, it is necessary to search for a proper \( k \). 
In addition, increasing embedding dimension \( E \) improves classification accuracy. 
Moreover, the hyperparameter $\lambda$ is crucial for model performance. 
Initially, an increase in $\lambda$ improves node classification accuracy. 
However, when $\lambda$ exceeds 0.7, the accuracy will decrease.

\subsection{Embeddings Visualization}
To provide a more intuitive presentation of the node embeddings, we utilize PCA~\cite{PCA} to visualize the node embeddings of DGI in Figure~\ref{fig:tsne} (a), NCLA in Figure~\ref{fig:tsne} (b) and GTCA in Figure~\ref{fig:tsne} (c). 
Different colors represent different categories.
Compared with DGI and NCLA, GTCA can distinguish different classes of nodes much more effectively.

\section{Conclusion}
Most of the existing GCL methods utilize graph augmentation strategies, which may perturb the underlying semantics of graphs.
Furthermore, they utilize GNNs as graph encoders, which inevitably results in the occurrence of over-smoothing and over-squashing issues.
To address these issues, we propose GNN-Transformer Cooperative Architecture for Trustworthy Graph Contrastive Learning (GTCA).
GTCA uses GCN and NodeFormer as encoders to generate node representation views. 
In addition, it utilizes topological property of graphs to generate the topology structure views. 
Theoretical analysis and experimental results demonstrate the effectiveness of GTCA.

While GTCA has been proved effective, it still faces challenges in terms of computational complexity.
Specifically, its quadratic complexity poses a significant computational burden when handling large-scale graph data.
In future work, we aim to explore more efficient techniques, 
such as parallel processing and distributed systems.

\section{Acknowledgements}
This work is supported by the National Science and
Technology Major Project (2020AAA0106102) and National
Natural Science Foundation of China (No.62376142,
U21A20473, 62306205).
\bibliography{aaai25}

\end{document}